\theoremstyle{plain}
\newtheorem{theorem}{Theorem}[section]
\newtheorem{definition}[theorem]{Definition}
\newtheorem{lemma}[theorem]{Lemma}
\newtheorem{remark}[theorem]{Remark}
\newtheorem{corollary}[theorem]{Corollary}
\newcommand{\lstc}{l_{\text{BTC}}}
\newcommand{\R}[0]{\mathds{R}} % real numbers
\newcommand{\N}[0]{\mathds{N}} % natural numbers
\newcommand{\sns}{\sigma_{\text{n}}^2}
\newcommand{\bx}{\boldsymbol{x}}
\newcommand{\boldf}{\boldsymbol{f}}
\newcommand{\Kff}{K_{\boldsymbol{f},\boldsymbol{f}}}
\newcommand{\MN}{BTC}
\title{Accelerated training of Gaussian Processes using banded square exponential covariances}
\name{Emily C. Ehrhardt and   Felipe Tobar}
\address{Imperial College London}
\begin{document}
\ninept
\maketitle
\begin{abstract}
We propose a novel approach to computationally efficient GP training based on the observation that square-exponential (SE) covariance matrices contain several off-diagonal entries extremely close to zero. We construct a principled procedure to eliminate those entries to produce a \emph{banded}-matrix approximation to the original covariance, whose inverse and determinant can be computed at a reduced computational cost, thus contributing to an efficient approximation to the likelihood function. We provide a theoretical analysis of the proposed method to preserve the structure of the original covariance in the 1D setting with SE kernel, and validate its computational efficiency against the variational free energy approach to sparse GPs. 

\end{abstract}
\begin{keywords}
Sparse Gaussian process, banded matrices.
\end{keywords}
\section{Introduction}
\label{sec:intro}

\subsection{Gaussian process and sparse approximations}
Gaussian processes (GPs) are non-parametric probabilistic models for time series specified by their mean $m$ and covariance $k$, whereby for a finite set of inputs $\bx \in \R^n$ the corresponding function values satisfy $f(\bx) \sim \mathcal{N}(m(\bx),k(\bx,\bx))$. In the Bayesian regression setting, denoting the observations by $\mathcal{D}= \{(x_i,y_i)\}_{i=1}^n$, GPs are trained by minimising the negative log-likelihood (NLL)
\begin{equation}
\label{eq:NLL}
     l_{\text{NLL}}
 =   \frac{1}{2}\boldsymbol{y}^\top (\Kff + \sns I )^{-1} \boldsymbol{y} + \frac{1}{2} \log|\Kff + \sns I| + \frac{n}{2} \log(2\pi),
\end{equation}
where we assumed a Gaussian likelihood $\boldsymbol{y} = f(\bx) + \varepsilon$ with $\varepsilon \sim \mathcal{N}(\boldsymbol{0},\sns I)$, $\boldsymbol{y}$  and $\boldf$ are stacked notations for the observations and latent function values, respectively, and $K_{\boldf,\boldf} = k(\bx,\bx)$.

For test inputs $\bx_\ast$, the predictive posterior of $\boldf_\ast = f(\bx_\ast)$ is also Gaussian $p(\boldf_\ast | \boldsymbol{y},\bx,\bx_\ast) = \mathcal{N}(\boldsymbol{\mu}_\ast, \Sigma_\ast)$ with
\begin{align}
    \boldsymbol{\mu}_\ast &= K_{\ast,\boldsymbol{f}}\left(K_{\boldsymbol{f},\boldsymbol{f}} + \sns I \right)^{-1} \boldsymbol{y}, \\  
    \Sigma_\ast &= K_{\ast,\ast} - K_{\ast,\boldsymbol{f}} \left(K_{\boldsymbol{f},\boldsymbol{f}} + \sns I \right)^{-1}K_{\boldsymbol{f},\ast},
\end{align}
where $K_{\ast,\ast} = k(\bx_\ast,\bx_\ast)$ and $K_{\ast,\boldf}^\top = K_{\boldf,\ast} = k(\bx,\bx_\ast)$. Both training and inference thus require solving linear systems involving $\Kff + \sns I$, which has complexity $\mathcal{O}(n^3)$.

Sparse GP methods \cite[Ch.~8]{rasmussen2005gaussian}, \cite{overview_sparseGP:quinonero-candela05a} address the computational bottleneck by introducing $m \ll n$ \emph{inducing variables} that summarise the data and thus allow for a low-rank approximation of the Gram matrix $\Kff$. These approximations reduce the computational complexity to $\mathcal{O}(nm^2)$ in general, and thus make GPs applicable to datasets with several thousands of observations. Some approaches, such as Fully Independent Training Conditional (FITC) \cite{FITC}, perform exact inference under an approximate model, while others, such as Variational Free Energy (VFE) \cite{titsias09a}, use variational inference.

Standard sparse GP methods such as VFE and FITC perform well when a few inducing points are sufficient to capture the overall structure of the latent function. However, long time-series typically require more inducing points to cover the input space, which adds substantial computational cost, often offsetting the efficiency gains. This motivates approaches that focus on local correlations, exploiting the fact that distant points are often weakly dependent, to achieve efficient and accurate modelling of long time-series.

% Despite their elegant formulation and thoroughly validated reduction in computational complexity, sparse GPs, and in particular VFE, consider cost functions that are more complex than the standard NLL to provide a general training framework appealing to any covariance kernel. Consequently, this additional layer of complexity results in an asymptotic computational gain, meaning that the advantage of sparse GPs is only evident for increasing $n$. 

\begin{figure}[t]
    \centering
     \begin{subfigure}[t]{0.49\linewidth}
    \vspace{0mm}
    \centering
        \includegraphics[scale=0.295]{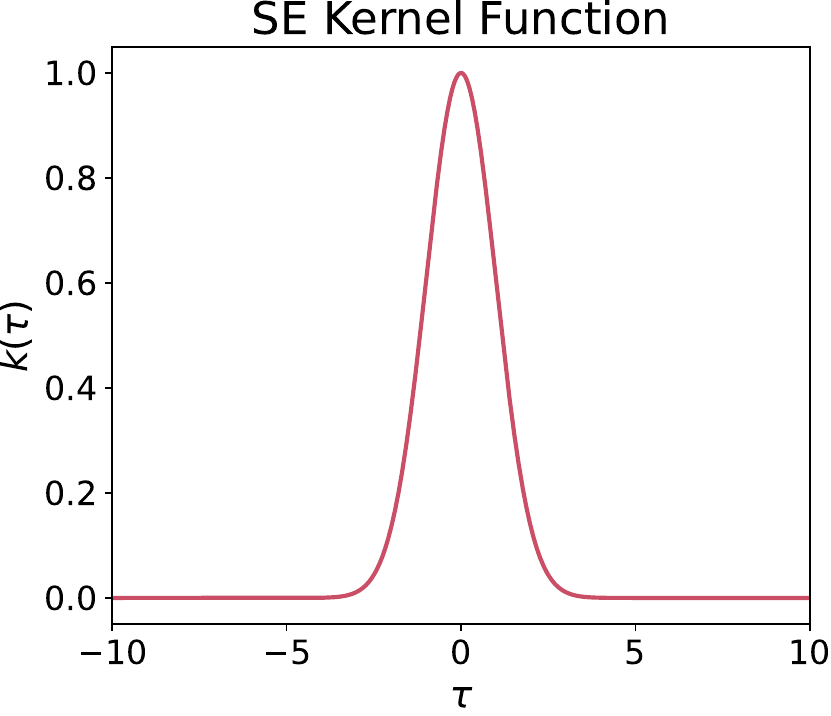}
    \end{subfigure}
    \hfill
    \begin{subfigure}[t]{0.49\linewidth}
     \vspace{0mm}
    \centering
        \includegraphics[scale=0.295]{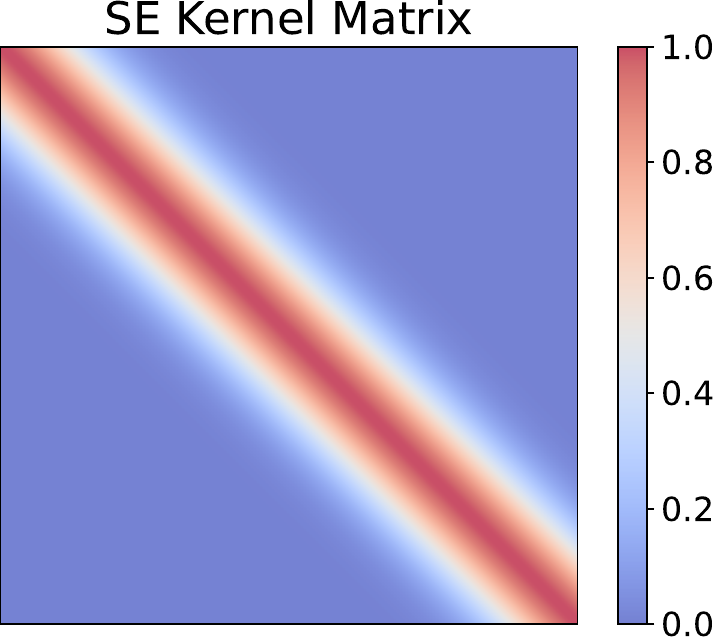}
    \end{subfigure}
    \caption{Square exponential kernel $k(\tau) = \sigma^2 \exp \left( - \tau^2/2\ell^2 \right)$, with  $\sigma = \ell = 1$. \textbf{Left:} function plotted for $\tau = |x- y |$. \textbf{Right:} Gram matrix for 1000 equidistant points with $x_{i+1}-x_i = 0.01$. }
    \label{fig:SE_kernel}
\end{figure}

\subsection{Exploiting the \emph{almost sparse} covariance}

We focus on GPs supported on $\R$ using the square exponential (SE) kernel, where, as illustrated in Fig.~\ref{fig:SE_kernel}, a vast subset of the pairwise covariances decay exponentially to zero. We exploit this \emph{approximate} sparsity by setting some of its off-diagonals to zero after a defined bandwidth $k$. Because of the sparsity pattern of $k$-banded matrices, the solution of its related linear system and log-determinant can be computed at a cost $\mathcal{O}(nk^2)$ by using the LU decomposition  \cite[Prop. 2.3]{AppliedNumericalLinearAlgebra} or the Cholesky decomposition \cite[Ch. 4.3.5]{MatrixComputations}. Therefore, we perform approximate-likelihood GP training by replacing the covariance matrix with its banded approximation. This method, termed banded covariance training (BTC), yields the same computational cost as the celebrated sparse GP approaches \cite{FITC, titsias09a}, without introducing (variational) parameters that need to be optimised. 

The defined setup (1-dimensional, SE kernel, Gaussian noise) allows us to identify a bound for the bandwidth $k$ that provides maximum computational gain while preserving i) the positive definiteness of the approximated Gram matrix, and ii) the validity of the posterior predictive distribution. The proposed BTC approach is tested against FITC, VFE and standard ML in terms of accuracy and computational complexity on synthetic and real-world datasets of varying size. Critically, we show experimentally that BTC outperforms all considered benchmarks in the defined setup.

%%%%%%%%%%%%%%%%%%%%%%%%%%%%
%%%%%%%%%%%%%%%%%%%%%%%%%%%%
\section{Banded Training Covariance (BTC)}
\label{sec:1D}

%In the context of sparse Gaussian process regression, we propose a novel approach that leverages the existing \emph{quasi-sparse} structure of the covariance matrix. We aim to exploit the fact that Gram matrices have several entries that are close to zero but not exactly zero due to the infinite support feature of the SE kernel. For an illustration, Figure \ref{fig:SE_kernel} (left) shows an evaluation of the SE kernel, where it can be seen that the exponential decay implies that only points close to one another have identifiable covariance. The evident almost banded structure can be seen in  Figure \ref{fig:SE_kernel} (right), where all entries of the covariance matrix outside of a band around the main diagonal become negligible.  

%\textcolor{red}{[Mention computational complexity of $\mathcal{O}(nk^2)$]}

\subsection{Truncation of covariance matrices}
\label{sec:truncation}

We define $k$-banded matrices and the operator to construct them.

\begin{definition}{($k$-banded Matrix)}
    A matrix $A = (A_{ij}) \in \R^{n \times n}$ is called banded with  bandwidth $k$ if 
    $$
    A_{ij} = 0  \text{ for }  |i-j| > k.
    $$
\end{definition}

\begin{definition}{(Cut-Off Operator)}
\label{def:cutoff_op}
    For $n,k \in \N$ with $k \leq n$ we define the linear ``cut-off" operator $L_k \colon \R^{n \times n} \to \R^{n \times n}$ by 
    $$
    \left( L_k(A) \right)_{ij} = \begin{cases}
        A_{ij} & \text{if } |i-j| \leq k, \\
        0 & \text{if not}
    \end{cases}
    $$
    for $A = (A_{ij}) \in \R^{n \times n}$ and $1 \leq i,j \leq n$.
\end{definition}

\begin{remark}
    The cut-off operator is invariant to the addition of a noise covariance term, that is,
$$
L_k(A + \sns I) = L_k(A) + \sns I,\quad  A \in \R^{n \times n},\, k \leq n.
$$ 
\end{remark}

Building on Def.~\ref{def:cutoff_op}, we propose an approximate GP training procedure by replacing the covariance matrix $\Kff$ in eq.~\eqref{eq:NLL} with its $k$-banded version $L_k(\Kff)$. That is, 
\begin{align}
\label{eq:BTC_loss}
     \lstc
     & =   \frac{1}{2}\boldsymbol{y}^\top (L_k(\Kff) + \sns I )^{-1} \boldsymbol{y} + \frac{n}{2} \log(2\pi)\nonumber\\
     &\quad + \frac{1}{2} \log(|L_k(\Kff) + \sns I|) .
\end{align}
\begin{remark}
    The computational cost to evaluate the proposed loss $\lstc$ in eq.~\eqref{eq:BTC_loss} is $\mathcal{O}(k^2n)$.
\end{remark}
After training, the predictive distribution can be obtained from the \textit{approximate joint prior distribution}, i.e., the joint prior using the $k$-banded version of the Gram matrix given by:
\begin{equation}
\label{eq:appr_prior}
    q(\boldsymbol{y},\boldsymbol{f}_\ast) = \mathcal{N} \left(\boldsymbol{0}, \begin{bmatrix}
        L_k(K_{\boldsymbol{f},\boldsymbol{f}}) + \sns I & K_{\boldsymbol{f}, \ast}\\
        K_{\ast, \boldsymbol{f}} & K_{\ast,\ast}
    \end{bmatrix}  \right).
\end{equation}

Eq.~\eqref{eq:appr_prior} implies that, $\boldsymbol{y}$ is Gaussian with covariance matrix $L_k(K_{\boldsymbol{f},\boldsymbol{f}}) + \sns I$, which should be symmetric positive definite matrix. Additionally, to ensure a valid posterior distribution, the posterior covariance matrix also needs to be symmetric and positive definite. We study these conditions in Secs.~\ref{sec:StructurePreservation} and \ref{sec:validity} respectively.

\subsection{Related methods}
BTC is most closely related to covariance tapering \cite{Furrer01092006}, which introduces sparsity by replacing the covariance function $k$ with a tapered version
\begin{equation*}
    k_{\text{tap}}(\bx, \bx^\prime) = \Tilde{k}(\bx, \bx^\prime) k(\bx, \bx^\prime),
\end{equation*}
where $\Tilde{k}$ is a compactly supported covariance function. Here, the resulting Gram matrix $k_{\text{tap}}$ is positive definite as the Hadamard product of two positive definite matrices. BTC would coincide with covariance tapering for $\Tilde{k}(\bx_i,\bx_j) =  \mathds{1}(|i-j| \leq k)$. However, this function is generally not positive definite and thus not a valid choice for $\Tilde{k}$ in the framework of covariance tapering. This phenomenon is avoided in our setting by choosing the optimal $k$ for the SE kernel case as described in Sec.~\ref{sec:StructurePreservation}. Another related line of work introduces sparsity into the precision matrix, the inverse Gram matrix, using a conditional independence assumption \cite{vecchia}, \cite{nngp},\cite{general_vecchia}. 

Despite their seemingly equivalent computational complexity, the relationship between BTC and sparse GP methods using inducing variables is not straightforward. When approximating $\Kff$ by $L_k(\Kff)$, BTC implicitly assumes that each observation is only correlated to the $k$ previous and the $k$ following observations, meaning that, beyond $2k$ time indices, samples are considered to be uncorrelated. This is in contrast with the global conditional independence assumed by sparse GPs given the inducing points.
 % The approximation order in sparse GP is the number of inducing variables $m$, while in the proposed BTC approach is the bandwidth $k$. Intuitively, 
 % Using $L_k(K_{\boldsymbol{f},\boldsymbol{f}})$ instead strengthens this phenomenon and leads to a sparser covariance to reduce computational complexity, similarly to sparse GP. 

 \section{Theoretical results}

\subsection{Structure preservation of the Cut-Off operator}
\label{sec:StructurePreservation}

To guarantee that the marginal prior $\boldsymbol{y} \sim \mathcal{N}(0,L_k(K_{\boldsymbol{f},\boldsymbol{f}}) + \sns I)$ in Eq.~\eqref{eq:appr_prior} is valid (i.e.\ non-degenerate), the bandwidth $k$ should be chosen such that $L_k(K_{\boldsymbol{f},\boldsymbol{f}}) + \sns I$ is symmetric and positive definite. In addition to theoretical soundness, this also promotes practical numerical stability and is even necessary to apply the Cholesky decomposition.  Since the Gram matrix of the SE kernel $K_{\boldsymbol{f},\boldsymbol{f}}$ is known to be symmetric and positive definite \cite[Thm.~6.10]{Wendland_2004}, we characterise the choice of $k$ to preserve positive definiteness through the following lemma.

\begin{lemma}{(Positive definiteness of a cut-off matrix)} \label{Lem:general_pd}
    Consider a symmetric positive definite matrix $A = (A_{ij}) \in \R^{n \times n}$. Choose $k \leq n$ such that
    $$
    \max_{i = 1, \dots, n} \sum_{|i-j|> k} |A_{ij}|  < \lambda_{\min}(A).
    $$
    Then, the cut-off matrix $L_k(A)$ is positive definite.
\end{lemma}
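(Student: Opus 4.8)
The plan is to treat $L_k(A)$ as a small symmetric perturbation of $A$ by the matrix of discarded entries, and to control that perturbation spectrally. First I would write $A = L_k(A) + E$, where $E := A - L_k(A)$ is the symmetric matrix collecting exactly the off-band entries, i.e.\ $E_{ij} = A_{ij}$ for $|i-j| > k$ and $E_{ij} = 0$ otherwise. Since the banding condition $|i-j| \le k$ is symmetric in $i$ and $j$ and $A$ is symmetric, both $L_k(A)$ and $E$ are symmetric; in particular $L_k(A)$ is symmetric, so it suffices to show that its quadratic form is strictly positive on nonzero vectors.

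Next I would bound the spectral norm of $E$. Because $E$ is symmetric, its operator norm equals its spectral radius, and its diagonal entries vanish ($E_{ii}=0$ since $|i-i|=0 \le k$). Applying Gershgorin's circle theorem to $E$, every eigenvalue $\mu$ of $E$ satisfies $|\mu| \le \sum_{j:\,|i-j|>k} |A_{ij}|$ for some index $i$, whence
$$
\|E\|_2 = \rho(E) \;\le\; \max_{i} \sum_{|i-j|>k} |A_{ij}| \;=:\; M.
$$

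Finally I would combine the two estimates. For any $x \ne 0$,
$$
x^\top L_k(A)\, x = x^\top A\, x - x^\top E\, x \ge \lambda_{\min}(A)\,\|x\|^2 - \|E\|_2\,\|x\|^2 \ge \big(\lambda_{\min}(A) - M\big)\,\|x\|^2 > 0,
$$
where the final strict inequality is exactly the hypothesis $M < \lambda_{\min}(A)$. This establishes that $L_k(A)$ is positive definite.

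I expect the only genuine subtlety to be recognising that the quantity in the hypothesis is precisely a Gershgorin bound on the discarded-entry matrix $E$, \emph{not} on $A$ itself; once the decomposition $A = L_k(A)+E$ is in place, the remainder is a routine perturbation estimate. An equivalent route that avoids invoking Gershgorin explicitly is to bound the quadratic form directly via $|A_{ij}\,x_i x_j| \le \tfrac12 |A_{ij}|(x_i^2 + x_j^2)$ and exploit the symmetry of the index set $\{\,|i-j|>k\,\}$ to obtain the same inequality $|x^\top E x| \le M\,\|x\|^2$.
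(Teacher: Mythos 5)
Your proof is correct and takes essentially the same route as the paper: both decompose $A = L_k(A) + E$ with $E = A - L_k(A)$ the off-band remainder, bound its spectrum via Gershgorin's circle theorem (exploiting $E_{ii}=0$), and conclude from the quadratic-form estimate $x^\top L_k(A)\,x \geq \left(\lambda_{\min}(A) - \lambda_{\max}(E)\right)\|x\|^2 > 0$. The only cosmetic difference is that you phrase the perturbation bound through $\|E\|_2 = \rho(E)$ while the paper works directly with $\lambda_{\max}(A - L_k(A))$; the two are interchangeable here.
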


\begin{proof}
    Since $A$ and $A-L_k(A)$ are symmetric, we have 
    \begin{align*}
    \lambda_{\min}(A) &= \min_{\bx \neq 0} \frac{\bx^\top A \bx}{\|\bx\|_2^2} \\
    \lambda_{\max}(A-L_{k}(A)) &= \max_{\bx \neq 0} \frac{\bx^\top (A-L_k(A))\bx}{\|\bx\|_2^2}.
    \end{align*}
    Hence, for $\bx \in \R^n$ the following inequality holds 
    \begin{align*}
    \bx^\top L_k(A)\bx &= \bx^\top A \bx - \bx^\top(A-L_k(A)) \bx \\
    &\geq \left[\lambda_{\min}(A) - \lambda_{\max}(A-L_k(A)) \right] \|\bx\|^2.
    \end{align*}
    Following Gershgorin's Circle Theorem \cite[Thm. 6.1.1]{Horn_Johnson_2012}, we have
    \begin{align*}
    \lambda_{\max}(A- L_k(A)) &\leq \max_{i = 1, \dots, n} \sum_{\substack{j=1 \\ j \neq i} }^n |(A- L_k(A))_{ij}| \\
    &= \max_{i = 1, \dots, n} \sum_{|i-j|> k} |A_{ij}|. 
    \end{align*}
    This concludes the proof. 
\end{proof}

% \begin{remark}(Generalisation of Lemma \ref{Lem:general_pd})\\
%     Since in the previous proof the banded structure of $L_k(A)$ was only used in the last step, this result can be generalised to any symmetric approximation of $A$ that retains the diagonal.  
% \end{remark}

Lemma \ref{Lem:general_pd} depends on the size of the Gram matrix and thus on the number of observations $n$. In practice, $k$ has to be chosen based on the kernel values, since the number of observations does not determine the dynamic features of the underlying latent function. Critically, in cases with long observation times, and thus large $n$, we expect the appropriate bandwidth $k$ not to grow with $n$,  while preserving positive definiteness. To this end, the following main result considers Gram matrices of the SE kernel with a noise term.

\begin{theorem}(Positive definiteness of a cut-off SE-kernel matrix with noise) \label{Thm:SE-KErnel_pd} 
Consider a set of points $x_1 < \dots < x_n$, the SE kernel $K_\theta$, parameters $\theta = (\sigma^2, \ell^2) \in \R^2_{++}$ and $\sns > 0$. 
Assume that $k \in \N$ is chosen such that
$$
(K_\theta)_{ij} = \sigma^2 \exp \left( - \tfrac{(x_i-x_j)^2}{2\ell^2}\right) \leq \varepsilon \leq \tfrac{\sns  3 \delta^2}{4 \ell^2} \exp \left( - \tfrac{3  \delta^2}{2 \ell^2} \right),
$$
for all $|i-j| > k$, where $\delta = \min_{i \neq j} |x_i-x_j|>0$.
Then we have that 
$$
L_k \left( K_\theta + \sns  I \right) \succ 0.
$$
\end{theorem}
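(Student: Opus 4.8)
The plan is to derive the statement directly from Lemma~\ref{Lem:general_pd} applied to $A = K_\theta + \sns I$. First I would record two elementary facts about $A$. Since the SE Gram matrix $K_\theta$ is symmetric positive definite, its eigenvalues shift by $\sns$, so $\lambda_{\min}(A) = \lambda_{\min}(K_\theta) + \sns > \sns$. Moreover, the noise term $\sns I$ only alters the diagonal, so the off-diagonal entries of $A$ coincide with those of $K_\theta$; since these entries are nonnegative, $\sum_{|i-j|>k}|A_{ij}| = \sum_{|i-j|>k}(K_\theta)_{ij}$ for every row $i$. By Lemma~\ref{Lem:general_pd} it therefore suffices to establish the row-wise bound $\max_i \sum_{|i-j|>k}(K_\theta)_{ij} < \sns \le \lambda_{\min}(A)$.

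Next I would bound a single row-tail sum. Fixing $i$ and treating the left and right tails symmetrically, I would exploit the minimum-spacing assumption: because $x_1<\dots<x_n$ has consecutive gaps at least $\delta$, we have $|x_i-x_j|\ge|i-j|\delta$. Writing the right-tail points as $x_{i+k+1}<x_{i+k+2}<\dots$ and setting $D = x_{i+k+1}-x_i \ge (k+1)\delta$, the $l$-th such point obeys $x_{i+k+l}-x_i \ge D + (l-1)\delta$. Expanding the square and discarding the nonnegative term $(l-1)^2\delta^2$ yields the geometric domination $(K_\theta)_{i,\,i+k+l} \le (K_\theta)_{i,\,i+k+1}\,\exp(-\tfrac{(l-1)D\delta}{\ell^2})$. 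The leading entry $(K_\theta)_{i,\,i+k+1}$ is itself a tail entry, hence bounded by $\varepsilon$; and for $k\ge 1$ we have $D\ge 2\delta$, so the common ratio satisfies $\exp(-\tfrac{D\delta}{\ell^2}) \le \exp(-\tfrac{2\delta^2}{\ell^2}) \le \exp(-\tfrac{3\delta^2}{2\ell^2})$.

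Summing the geometric series and doubling to account for both tails would give $\sum_{|i-j|>k}(K_\theta)_{ij} \le 2\varepsilon\,/\,(1-\exp(-\tfrac{3\delta^2}{2\ell^2}))$. Substituting the hypothesis $\varepsilon \le \tfrac{3\delta^2}{4\ell^2}\sns\exp(-\tfrac{3\delta^2}{2\ell^2})$ and abbreviating $u = \tfrac{3\delta^2}{2\ell^2}>0$ (so that the threshold coefficient is $\tfrac{3\delta^2}{4\ell^2}=\tfrac{u}{2}$), the desired strict inequality $<\sns$ collapses to the elementary bound $(1+u)e^{-u} < 1$, i.e.\ $1+u<e^u$, which holds for every $u>0$. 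This verifies the hypothesis of Lemma~\ref{Lem:general_pd}, giving $L_k(K_\theta+\sns I)\succ 0$. The degenerate case $k=0$ is immediate, since then the cut-off matrix is simply the diagonal $(\sigma^2+\sns)I$.

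I expect the tail summation to be the crux. The delicate point is to keep the \emph{leading} off-diagonal entry controlled by the per-entry hypothesis $\varepsilon$, while using the minimum spacing $\delta$ only to control the \emph{decay} of the subsequent entries; bounding all entries at once by their equispaced worst case would instead require controlling $\sigma^2\exp(-\tfrac{(k+1)^2\delta^2}{2\ell^2})$, which the hypothesis does not directly furnish on a non-uniform grid. Choosing the slightly loosened ratio bound $\exp(-\tfrac{3\delta^2}{2\ell^2})$, valid once $k\ge 1$, is precisely what makes the final step reduce to $1+u<e^u$ and reproduces the stated threshold constants.
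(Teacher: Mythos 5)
Your proposal is correct and takes essentially the same route as the paper's proof: both verify the hypothesis of Lemma~\ref{Lem:general_pd} by noting $\lambda_{\min}(K_\theta+\sns I)\geq\sns$, anchoring each row tail at the $(k+1)$-st off-diagonal entry (itself bounded by $\varepsilon$), dominating the remaining entries by a geometric series with ratio $e^{-3\delta^2/(2\ell^2)}$, and comparing the resulting sum against $\sns$. The only cosmetic differences are that the paper bounds the geometric sum via the integral test, which is equivalent to your elementary inequality $1+u<e^u$, and that its exponent estimate $2\delta b+b^2\geq 3(|j-i|-k-1)\delta^2$ handles $k=0$ uniformly, whereas your ratio bound $D\geq 2\delta$ requires $k\geq 1$ and hence your (correct) separate treatment of the diagonal case.
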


\begin{proof}
Since $K_\theta$ is positive semidefinite, all of its eigenvalues are non-negative and hence $\lambda_{\min} (K_\theta + \sns I) \geq \sns $. Consider $|i-j|>k$. First, we assume $j > i$. Then,
\begin{align*}
    (K_\theta)_{ij} 
    %&= \sigma^2 \exp \left( - \frac{(x_i-x_j)^2}{2\ell^2}\right) \\
    &= \sigma^2 \exp \left( - \frac{((x_i- x_{i+k+1}) + (x_{i+k+1}-x_j))^2}{2\ell^2}\right) \\
    %&= \sigma^2 \exp \left(-\frac{(x_i- x_{i+k+1})^2}{2 \ell^2} \right) \exp \left( - \frac{(x_{i+k+1}-x_j)^2 + 2(x_i- x_{i+k+1})(x_{i+k+1}-x_j)}{2\ell^2}\right) \\
    % &\leq \varepsilon \exp \left( - \frac{(j-i-k-1) \delta^2 + 2(j-i-k-1) \delta^2}{2\ell^2}\right)\\
    &\leq \varepsilon \exp \left( - \frac{ 3(|j-i|-k-1) \delta^2}{2\ell^2}\right),
\end{align*}
where we used that $|x_i - x_{i+k+1}| \geq \delta$ and $|x_{i+k+1} - x_j| \geq (j-i-k-1) \delta$. By the symmetry of $K_\theta$ this also holds for $i>j$.\\
Hence, for $i=1,\dots,n$ we have
\begin{align*}
    \sum_{|i-j|> k} |(K_\theta + \sns I)_{ij}| &=  \sum_{|i-j|> k}  \sigma^2 e^{\frac{-(x_i-x_j)^2}{2\ell^2}} < 2\varepsilon \sum_{k=0}^\infty  e^{ \frac{ -3k\delta^2}{2\ell^2}}.
\end{align*}

Applying a result from the integral test for convergence \cite[Thm. 9.2.6]{IntroductionToRealAnalysis} yields
\begin{align}
\label{eq:pd_SE_kernel}
\nonumber
    \max_{i = 1, \dots, n} \sum_{|i-j|> k} |(K_\theta + \sns I)_{ij}| &<  2\varepsilon \int_{-1}^\infty  \exp \left( - \frac{ 3k\delta^2}{2\ell^2}\right) \: dk\\
    \nonumber
    % &= \varepsilon \frac{4 \ell^2}{3 \delta^2}  \left[ - \exp \left( - \frac{ 3k\delta^2}{2\ell^2}\right) \right]_{k=-1}^\infty\\
    % \nonumber
    &=  \varepsilon \frac{4 \ell^2}{3 \delta^2} \exp \left( \frac{ 3\delta^2}{2\ell^2}\right)\leq \sns ,
\end{align}
where we used the definition of $\varepsilon$ in the last line. Applying Lemma \ref{Lem:general_pd} yields the result.
\end{proof}

Assuming that $
    \delta := \min_{\substack{i \neq j}} |x_i-x_j|
    $
    remains invariant with increasing $n$, meaning that more observations span an increasing temporal range, Thm. \ref{Thm:SE-KErnel_pd} only depends on the kernel parameters and not on the number of observations $n$. When observations are sufficiently close to one another, the training set can be chosen as a subset of the observations such that $|x_i -x_j| \geq \delta \;\; \forall i,j$ for some fixed $\delta > 0$. This also avoids $\delta \approx 0$, which would lead to $k \to \infty$.  

    From Thm. \ref{Thm:SE-KErnel_pd}, we can obtain an explicit formula for the choice of $k$ for large $n$ through the following corollary. 

\begin{corollary}(Choice of $k$ for a cut-off SE-Kernel with  noise)
\label{cor:choice_k}
    Consider a set of points $x_1 < \dots < x_n$, parameters $\theta = (\sigma^2, \ell^2) \in \R^2_{++}$ and $\sns > 0$. 
    For $\delta = \min_{i\neq j} |x_i-x_j|$ choose
    \begin{equation}
    \label{eq:choice_m}
        k = \begin{cases}
        \left\lceil \sqrt{\frac{3}{2} + \frac{2 \ell^2}{\delta^2} \log \left( \frac{2 \sigma^2 \ell^2}{3 \sns \delta^2} \right)}  \right\rceil & \text{if } \frac{2 \sigma^2 \ell^2}{3 \sns \delta^2} > 1\\
        2 & \text{if not}
    \end{cases}.
    \end{equation}
Then we have that $
L_k \left( K_\theta + \sns  I \right) \succ 0.
$
\end{corollary}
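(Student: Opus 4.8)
The plan is to obtain Corollary~\ref{cor:choice_k} as a direct consequence of Theorem~\ref{Thm:SE-KErnel_pd}; the only thing to check is that the explicit bandwidth in \eqref{eq:choice_m} verifies the theorem's hypothesis. That hypothesis asks for some $\varepsilon$ with $(K_\theta)_{ij}\le\varepsilon\le T$ for all $|i-j|>k$, where $T:=\tfrac{3\sns\delta^2}{4\ell^2}\exp(-\tfrac{3\delta^2}{2\ell^2})$. Since $\varepsilon$ is a free intermediate quantity, this is equivalent to the single scalar inequality $\max_{|i-j|>k}(K_\theta)_{ij}\le T$, so the whole argument reduces to controlling the largest discarded entry.

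First I would bound that maximum. The entry $\sigma^2\exp(-(x_i-x_j)^2/2\ell^2)$ is strictly decreasing in the gap $|x_i-x_j|$, and the points are $\delta$-separated, so for $|i-j|>k$ we have $|x_i-x_j|\ge (k+1)\delta$ and therefore $\max_{|i-j|>k}(K_\theta)_{ij}\le\sigma^2\exp(-(k+1)^2\delta^2/2\ell^2)$. Choosing $\varepsilon$ equal to this upper bound makes the left inequality hold automatically, so it remains only to show $\sigma^2\exp(-(k+1)^2\delta^2/2\ell^2)\le T$.

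Next I would invert this exponential inequality. Taking logarithms and isolating the quadratic term turns it into a lower bound of the form $(k+1)^2\ge\text{const}+\tfrac{2\ell^2}{\delta^2}\log(\cdots)$; solving for $k$ reproduces the radical expression in \eqref{eq:choice_m}, and the ceiling rounds it up to an integer. Rounding up is harmless, since a larger $k$ only shrinks the maximal discarded entry, so the inequality, and hence the hypothesis of Theorem~\ref{Thm:SE-KErnel_pd}, is preserved; invoking the theorem then gives $L_k(K_\theta+\sns I)\succ 0$. The fiddly bookkeeping here is tracking the index shift (the $k+1$, or the cruder $|x_i-x_j|\ge k\delta$) and the constants (the $3$, the factors of $2$, and the $\log$ argument) so that the algebra lands on exactly \eqref{eq:choice_m}.

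The remaining, and main, subtlety is the case split. The guard $\tfrac{2\sigma^2\ell^2}{3\sns\delta^2}>1$ is precisely the condition making the logarithm in \eqref{eq:choice_m} nonnegative, so that the radicand stays positive and the square root is well defined; this is the regime covered by the closed form. When $\tfrac{2\sigma^2\ell^2}{3\sns\delta^2}\le 1$ the signal is weak relative to the noise floor and the off-band entries are already negligible, so the closed form is not needed; there I would instead verify directly that the fixed value $k=2$ satisfies $\sigma^2\exp(-(k+1)^2\delta^2/2\ell^2)\le T$, using this ratio bound together with $\delta$-separation, and again apply Theorem~\ref{Thm:SE-KErnel_pd}. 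I expect this default branch, rather than the routine log-inversion, to be the part that needs the most care.
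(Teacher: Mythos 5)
Your plan for the main branch coincides with the paper's own (very terse) proof: the paper simply bounds the largest discarded entry by $\sigma^2\exp\left(-\tfrac{(k\delta)^2}{2\ell^2}\right)$ for $|i-j|>k$ (even cruder than your $(k+1)\delta$) and states that the result ``follows directly'' from Thm.~\ref{Thm:SE-KErnel_pd}; your log-inversion is exactly the algebra the paper omits. One caution, though: that algebra does \emph{not} land exactly on \eqref{eq:choice_m} as you assert. Inverting $\sigma^2 e^{-k^2\delta^2/(2\ell^2)} \leq \tfrac{3\sns\delta^2}{4\ell^2}e^{-3\delta^2/(2\ell^2)}$ gives $k^2 \geq 3 + \tfrac{2\ell^2}{\delta^2}\log\left(\tfrac{4\sigma^2\ell^2}{3\sns\delta^2}\right)$, whose constants ($3$ and $4$) differ from the printed radicand ($\tfrac{3}{2}$ and $2$); the printed formula corresponds to a threshold of $\tfrac{3\sns\delta^2}{2\ell^2}e^{-3\delta^2/(4\ell^2)}$. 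This inconsistency is in the paper itself, but a careful write-up must either rederive the theorem's threshold or concede that the printed $k$ is marginally smaller than what Thm.~\ref{Thm:SE-KErnel_pd} as stated certifies.

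The genuine gap is in your default branch, which you rightly flagged as the delicate part: the verification you propose there would fail. Take $\sigma^2 = \tfrac{3\sns\delta^2}{2\ell^2}$, so the guard ratio equals $1$. Then $\sigma^2 e^{-(k+1)^2\delta^2/(2\ell^2)} \leq \tfrac{3\sns\delta^2}{4\ell^2}e^{-3\delta^2/(2\ell^2)}$ with $k=2$ reduces to $2 \leq e^{3\delta^2/\ell^2}$, which is false whenever $\delta^2 < \tfrac{\ell^2 \log 2}{3}$ --- precisely the densely sampled regime ($\delta \ll \ell$) where the low-signal case $\tfrac{2\sigma^2\ell^2}{3\sns\delta^2}\leq 1$ typically arises (with the cruder $k\delta$ gap the failure region is even larger). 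So the single-entry hypothesis of Thm.~\ref{Thm:SE-KErnel_pd} simply cannot certify $k=2$ here, and no bookkeeping of constants fixes that. The repair is to bypass the theorem and argue one level deeper, via Lemma~\ref{Lem:general_pd}: bound the whole off-band Gershgorin sum, $\sum_{|i-j|>2}(K_\theta)_{ij} \leq 2\sigma^2 \sum_{m\geq 3} e^{-m^2 a}$ with $a = \delta^2/(2\ell^2)$, and insert the guard bound $\sigma^2 \leq 3\sns a$; one then checks $6a\sum_{m\geq 3}e^{-m^2 a} < 1$ uniformly in $a>0$ (the sum behaves like $3\sqrt{\pi a}$ as $a\to 0$ and like $6a e^{-9a}$ as $a \to \infty$, peaking at roughly $0.5$), so the off-band mass stays below $\sns \leq \lambda_{\min}(K_\theta + \sns I)$. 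Note the paper's one-line proof is silent on this branch as well, so your instinct located a real weak point --- but your proposed route through the entrywise threshold does not close it, whereas the summation argument does.
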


\begin{proof}
    Since for $|i-j|>k$ we have that 
    \begin{equation}
         (K_\theta)_{ij} = \sigma^2 \exp \left( - \frac{(x_i-x_j)^2 }{2 \ell^2}\right) \leq \sigma^2 \exp \left( - \frac{(k \delta)^2 }{2 \ell^2}\right),
    \end{equation}
    the result follows directly from Thm. \ref{Thm:SE-KErnel_pd}.
\end{proof}

\begin{remark}
    Corollary \ref{cor:choice_k} assumes that $n$ is sufficiently large to ensure $k \leq n$.
\end{remark}

\subsection{Validity of the Predictive Distribution}
\label{sec:validity}

We denote the predictive distribution using the cut-off training covariance matrix by $\mathcal{N}(\Tilde{\boldsymbol{\mu}}, \Tilde{\Sigma} )$, where
 \begin{align}
    \Tilde{\boldsymbol{\mu}} &=  K_{\ast,\boldsymbol{f}}\left(L_k(K_{\boldsymbol{f},\boldsymbol{f}}) + \sns I \right)^{-1} \boldsymbol{y},\label{eq:pred_mu_cut}\\  \Tilde{\Sigma} &= K_{\ast,\ast} - K_{\ast,\boldsymbol{f}} \left(L_k(K_{\boldsymbol{f},\boldsymbol{f}}) + \sns I \right)^{-1}K_{\boldsymbol{f},\ast} \label{eq:pred_S_cut}
\end{align}

We seek a valid predictive distribution with a positive definite predictive covariance matrix. The following theorem states that this is the case for \MN, if $k$ is chosen as in Thm. \ref{Thm:SE-KErnel_pd}.

\begin{theorem}(Positive definiteness of the predictive covariance)
\label{Thm:pd_pred_cov}
    For an SE-Kernel and $k \in \N$ chosen as in Thm. \ref{Thm:SE-KErnel_pd}, the predictive covariance 
    \begin{equation*}
        \Tilde{\Sigma} = K_{\ast,\ast} - K_{\ast, \boldf} \left[L_k\left( K_{\boldf,\boldf} + \sns I \right)\right]^{-1}K_{\boldf,\ast}
        \label{eq:PD_predK}
    \end{equation*}
    is positive definite.
\end{theorem}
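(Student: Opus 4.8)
The plan is to realize $\Tilde{\Sigma}$ as a Schur complement and thereby reduce the claim to the positive definiteness of the full approximate joint covariance in eq.~\eqref{eq:appr_prior}. Writing $M = L_k(\Kff) + \sns I$ for its training block, Thm.~\ref{Thm:SE-KErnel_pd} already gives $M \succ 0$. Since
$$
S := \begin{bmatrix} M & K_{\boldf,\ast}\\ K_{\ast,\boldf} & K_{\ast,\ast}\end{bmatrix}
$$
has $\Tilde{\Sigma} = K_{\ast,\ast} - K_{\ast,\boldf} M^{-1} K_{\boldf,\ast}$ as the Schur complement of $M$, the standard equivalence (valid because $M \succ 0$) states that $\Tilde{\Sigma} \succ 0$ holds \emph{if and only if} $S \succ 0$. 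I would therefore prove the latter directly and then read off the conclusion.

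The key step is a structural decomposition of $S$ that isolates the effect of the cut-off. Let $\Tilde{G} = \begin{bmatrix} \Kff & K_{\boldf,\ast}\\ K_{\ast,\boldf} & K_{\ast,\ast}\end{bmatrix}$ be the noiseless SE Gram matrix on the combined input set, and let $B = \Kff - L_k(\Kff)$ be the discarded off-band part. A direct entrywise check gives
$$
S = \Tilde{G} + \begin{bmatrix} \sns I - B & 0\\ 0 & 0\end{bmatrix}.
$$
Here $\Tilde{G} \succeq 0$ since it is a kernel Gram matrix, while $B$ is symmetric with zero diagonal and, by exactly the Gershgorin estimate used inside the proof of Thm.~\ref{Thm:SE-KErnel_pd}, each row-sum of $|B|$ equals $\sum_{|i-j|>k} |(K_\theta)_{ij}| < \sns$. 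Hence the spectrum of $B$ lies in $(-\sns,\sns)$ and $\sns I - B \succ 0$, so the correction block is positive semidefinite (strictly positive on the training coordinates, zero on the test ones).

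To upgrade the resulting semidefiniteness to strict definiteness I would argue on the quadratic form. For $z = (u,v) \neq 0$,
$$
z^\top S z = z^\top \Tilde{G} z + u^\top(\sns I - B) u \ge 0,
$$
and split into cases: if $u \neq 0$ the second term is strictly positive; if $u = 0$ and $v \neq 0$ then $z^\top S z = v^\top K_{\ast,\ast} v > 0$, because $K_{\ast,\ast}$ is strictly positive definite for distinct test inputs (the SE kernel being strictly positive definite). Either way $z^\top S z > 0$, giving $S \succ 0$, and the Schur-complement equivalence then yields $\Tilde{\Sigma} \succ 0$.

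The main obstacle is resisting the tempting but doomed estimate: one might write $S = G - E$ with $G$ the exact noisy-training joint and $E = \mathrm{diag}(B,0)$, and hope that $\lambda_{\min}(S) \ge \lambda_{\min}(G) - \|E\|_2$. This fails because the exact joint carries no noise on its test block, so $\lambda_{\min}(G)$ need not exceed $\sns$, whereas $\|E\|_2 = \|B\|_2$ can be close to $\sns$; the crude bound is then vacuous. The resolution is precisely the block decomposition above, which confines the indefinite correction $-B$ to the training coordinates where the $\sns I$ buffer resides, and handles the purely-test directions separately through the strict positive definiteness of $K_{\ast,\ast}$.
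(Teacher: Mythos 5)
Your proposal is correct and takes essentially the same route as the paper: the reduction via the Schur-complement equivalence (Thm.~7.7.7 of Horn--Johnson) is identical, and your decomposition $S = \Tilde{G} + \operatorname{diag}(\sns I - B,\, 0)$ with the Gershgorin bound on $B = \Kff - L_k(\Kff)$ is precisely the ``expand the quadratic form and bound each term as in Lemma~\ref{Lem:general_pd} and Thm.~\ref{Thm:SE-KErnel_pd}'' step that the paper only sketches. Your write-up in fact supplies details the paper leaves implicit, notably that strict definiteness in the $u=0$ directions relies on $K_{\ast,\ast}\succ 0$, i.e.\ on the test inputs being distinct.
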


\begin{proof}
    Since from Thm. \ref{Thm:SE-KErnel_pd} we know that $L_k(K_{\boldf,\boldf} + \sns I) \succ 0$, Thm. 7.7.7. in \cite{Horn_Johnson_2012} states that positive definiteness of Eq.~\eqref{eq:PD_predK} is equivalent to  
         \begin{equation*}
         \begin{bmatrix}
        L_k(K_{\boldf,\boldf} + \sns I) & K_{\boldf, \ast}\\
        K_{\ast, \boldf} & K_{\ast,\ast}
    \end{bmatrix} \succ 0.
    \end{equation*}
    Then, the proof follows by expanding the quadratic form above and bounding each term as in the proofs of Lemma \ref{Lem:general_pd} and Thm.~\ref{Thm:SE-KErnel_pd}, and applying the Gershgorin Circle Theorem.
\end{proof}

\section{Experiments}
We evaluated BTC's performance and validated the methodology for choosing $k$ as proposed in Corollary \ref{cor:choice_k}. All experiments were run on a CPU-only Galaxy Book2 Pro Evo (Intel Core i7-1260P). All results were obtained using 5-fold cross-validation.

\subsection{Performance evaluation}

We compared BTC to the exact GP, FITC and VFE over two real-world datasets: The monthly sunspots dataset \cite{SILSO_Sunspot_Number}, and the Helsinki neonatal EEG dataset \cite{eeg}. The performance indices considered were runtime, normalised mean squared error (NMSE), given by 
\begin{equation}
\label{eq:NMSE}
\text{NMSE} = \frac{\frac{1}{n_\ast} \sum_{i=1}^{n_\ast} (y_{\ast,i} - \mu_{\ast,i})^2}{\frac{1}{n_\ast} \sum_{i=1}^{n_\ast} (y_{\ast,i} - \Bar{y}_\ast)^2}, \quad \text{where } \Bar{y}_\ast = \frac{1}{n_\ast} \sum_{i=1}^{n_\ast} y_{\ast,i},
\end{equation}
and the negative log predictive density (NLPD) of the test set
\begin{equation}
    \label{eq:NLPD}
    \text{NLPD} =  - \log p(\boldsymbol{y}_\ast | \boldsymbol{y},X,X_\ast).
\end{equation}

Our implementations of the exact GP include a \texttt{gpflow} model \cite{GPflow2017} and a basic toolbox-free implementation adapted from \cite{neurips_code}, which also forms the basis of our BTC implementation. VFE and FITC were implemented thorugh  \texttt{gpflow}.

Figs.~\ref{fig:sunspots} and \ref{fig:eeg} show performance vs runtime for the sunspots and EEG datasets, respectively. Since both NMSE and NLPD lower values indicate better performance, the best models are those close to the bottom-left corner of each plot. For BTC, results are only reported for values of $k$ that empirically preserve positive definiteness of the training covariance during optimisation.\\

% Generally, we note that the VFE method tends to perform better that the FITC method as expected. This is more pronounced for the NLPD. This indicates that FITC is better at predicting the mean than the variances. For a further comparison between FITC and VFE we refer to \cite{FITC_VFE}.\\
For all datasets and performance indicators, BTC outperformed FITC and VFE for all choices of $m$ and $k$. For all valid bandwidths (i.e., those that preserve positive definiteness during training), BTC achieves NMSE and NLPD values that are nearly identical to those of a full GP, but at substantially reduced runtime. Furthermore, also note the almost constant performance of BTC wrt the bandwidth parameter $k$, this is a desired consequence of our approach: since $k$ was chosen according to Corollary \ref{cor:choice_k}, then it is guaranteed that the structural information of the covariance is preserved, and then no relevant information is lost. Therefore, increasing $k$ does not provide enhanced accuracy but only incorporates an additional layer of computational complexity given by the incorporation of negligible (but non-zero) entries of the covariance matrix. 

Lastly, across both datasets, our basic full GP model performed similarly to the \texttt{gpflow} implementation in terms of NMSE and NLPD, but with a higher runtime. This is expected given the multiple accelerated features of \texttt{gpflow}. Also, note that the performance of FITC is non-monotonic wrt the number of inducing variables and that in Fig.~\ref{fig:eeg} it exhibits poor NLPD \cite{FITC_VFE}.

% Combined with the theoretical results on the choice of $k$, this implies that we can choose $k$ independent of $n$ and hence BTC is a linear-time GP method, i.e.\ its time complexity grows linearly with $n$.  

\begin{figure}[ht]
    \centering
     \includegraphics[scale=0.295]{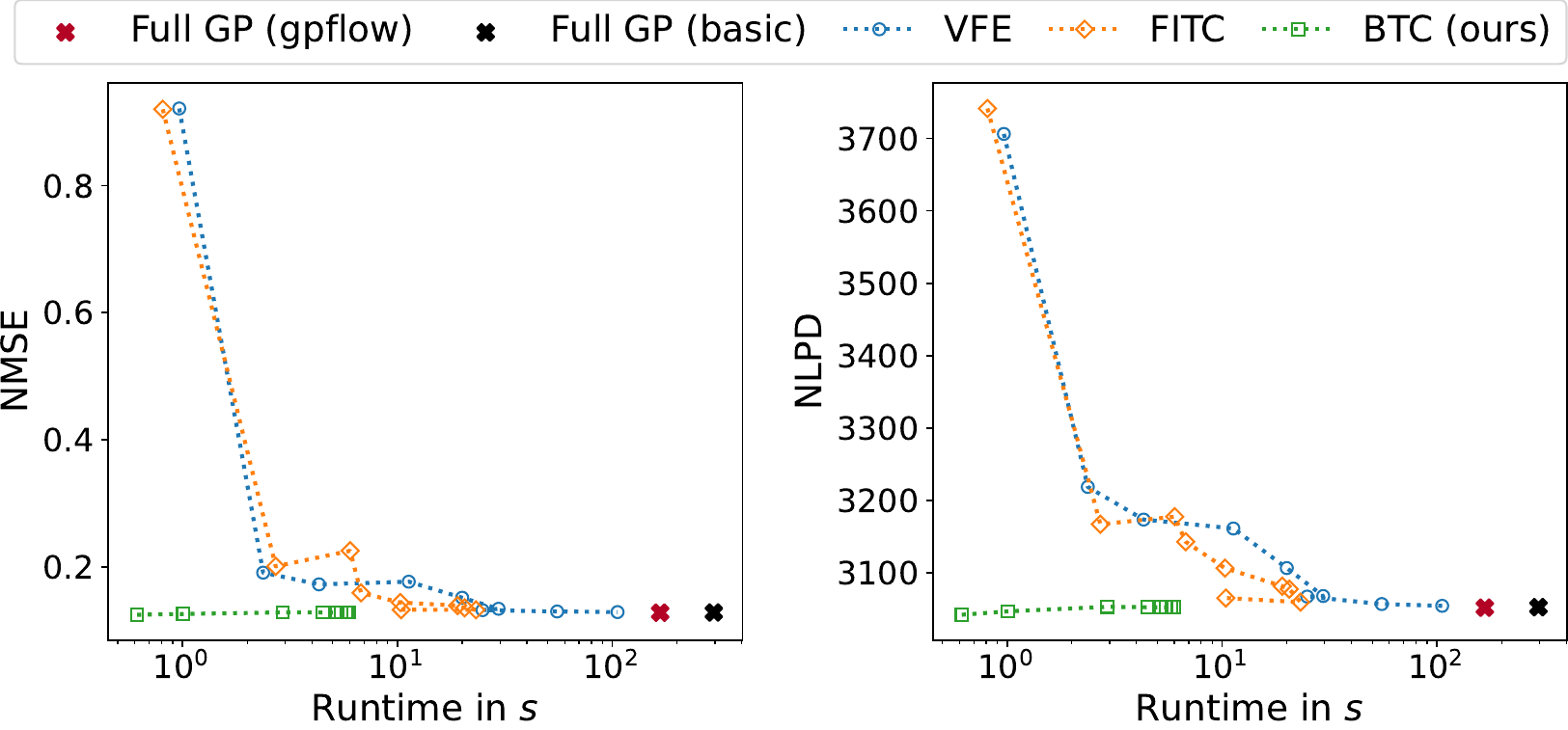}
    \caption{Sunspots dataset (3315 datapoints): NMSE (left) and NLPD (right) vs runtime. Approximation orders $m$ (FITC and VFE) and $k$ (BTC) were in the set $\{10,30,50,70,100,130,150,170,200\}$.}
    \label{fig:sunspots}
\end{figure}

\begin{figure}[ht]
    \centering
    \includegraphics[scale=0.295]{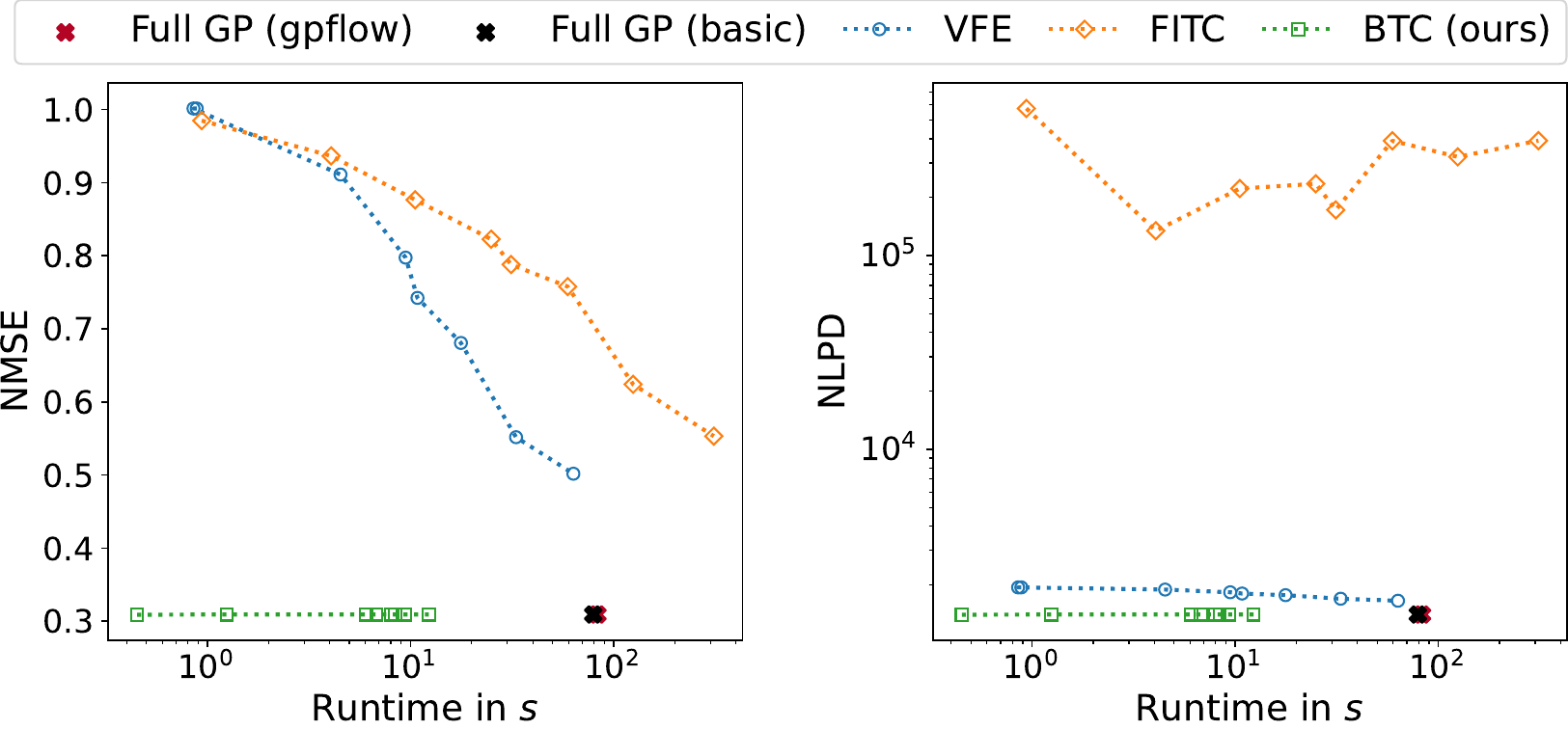}
    \caption{Neonatal EEG dataset (4000 datapoints): NMSE (left) and NLPD (right) vs runtime. Approximation orders $m$ (FITC and VFE) and $k$ (BTC) were in the set ${10,50,100,150,170,200,300,400}$.}
    \label{fig:eeg}
\end{figure}

\subsection{Validation of the theoretical choice of $k$}

Since BTC's accuracy was fairly robust wrt the bandwidth $k$ as shown in Figs.~\ref{fig:sunspots} and \ref{fig:eeg}, we then sought to determine the minimal admissible value $k$. Recall that Corollary \ref{cor:choice_k} provides a theoretically grounded choice of $k$ given the kernel (and noise) hyperparameters. To validate this theoretical result, we considered three SE-kernel GPs with different hyperparameters and computed the theoretical choice for $k$ as given by Corollary \ref{cor:choice_k} in each case; Table \ref{tab} shows the hyperparameters and chosen $K$. We then sampled one 2000-sample realisation from each GP and implemented BTC with bandwidths $k=1,\dots,50$. In this case, we considered equispaced data, since  $\delta = \min_{i \neq j} |x_i - x_j |$ often yields pessimistic choices of $k$ for non-equispaced data.

Fig.~\ref{fig:choice_k} shows BTC's prediction NMSE (green squares) for the cases where the banded covariance remained positive definite during 5-fold cross-validation, alongside the theoretical choice of $k$ (vertical red line). Observe that the choice of $k$, as given by Corollary \ref{cor:choice_k}, yielded a positive definite approximation in all cases. However, since Corollary \ref{cor:choice_k} guarantees a valid choice of $k$ for the true hyperparameters, positive definiteness throughout training is ensured only if the true parameters correspond to the highest choice of $k$ encountered during optimisation. For instance, in Fig.~\ref{fig:choice_k}(b), the true length-scale was smaller than the initial condition for that hyperparameter (set to 1), therefore, the theoretical choice of $k$ was barely sufficient. As a consequence, we recommend initialising the hyperparameters with a small length-scale and a larger noise variance than the expected true values.

\begin{figure}[ht]
    \scriptsize    \centering
    \captionof{table}{True parameters and theoretical choice of $k$ for Fig. \ref{fig:choice_k}.}
    \begin{tabular}{cccccc}
  \toprule
  \textbf{Case} & \textbf{$\delta$} & \textbf{$\sigma^2$} & \textbf{$\ell$} & \textbf{$\sigma_\text{n}^2$} & \textbf{Theo. Choice of $k$} \\
  \midrule
  (a) & 0.2 & 5   & 1    & 0.10 & 19 \\
  (b) & 0.1 & 1   & 0.75 & 0.01 & 31 \\
  (c) & 0.2 & 0.8 & 2    & 0.05 & 38 \\
  \bottomrule
\label{tab}
\end{tabular}
    \includegraphics[scale=0.295]{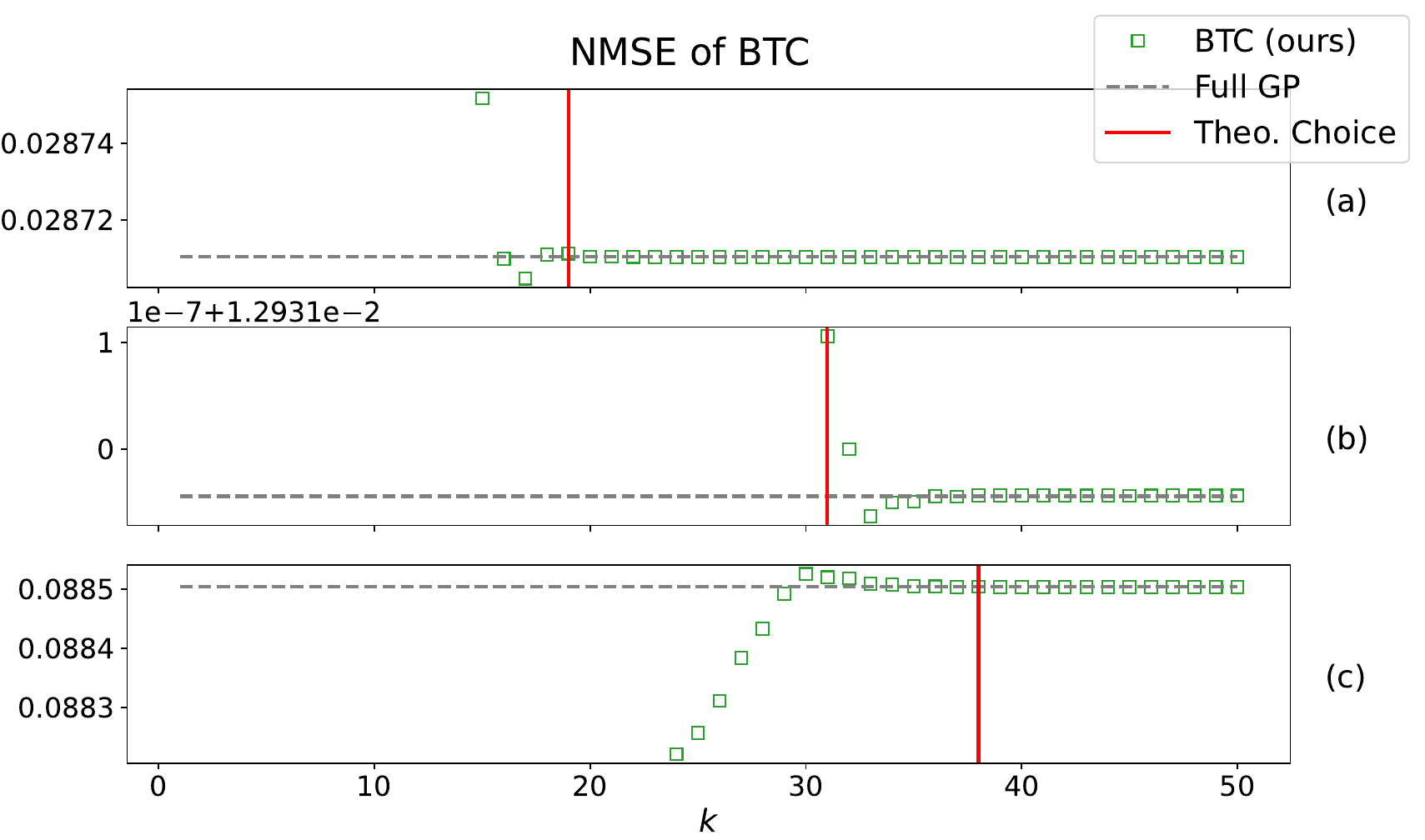}
    \captionof{figure}{Validation of the theoretical choice of $k$ on data generated with an SE kernel GP.}
    \label{fig:choice_k}
\end{figure}

\section{Conclusions}

We introduced Banded Training Covariance (BTC), a novel sparse GP method for one-dimensional GP regression with the SE kernel. BTC exploits the exponential decay of correlations to enforce a banded covariance structure, where the bandwidth $k$ that preserves positive definiteness and valid predictive distributions can be chosen independent of the sample size $n$. Our experiments showed that BTC matched the accuracy of full GPs in recovering hyperparameters, while reducing the computational cost relative to inducing-point methods. Addressing BTC’s restriction to the SE kernel and fixed bandwidth during optimisation will be key to broadening its practical applicability, particularly in higher dimensions.\\

% In this section, we have shown that for fixed kernel and noise parameters there exists a choice of $k$, which is independent of $n$ (under some mild assumptions), and which guarantees the training covariance matrix for an SE kernel with added noise to be positive definite for sufficiently large $n$. However, some of the bounds can be very loose for not equispaced inputs. Hence, we would not necessarily recommend them for practical purposes but rather see them as theoretical results highlighting the independence of $k$ from $n$ and thus guaranteeing the existence of structure preserving cut-off operations for sufficiently large $n$. \\

\noindent\textbf{Acknowledgements.}
E.C.E.~was supported by a fellowship of the German Academic Exchange Service (DAAD).

\newpage

\bibliographystyle{IEEEbib}
\bibliography{references}

@book{rasmussen2005gaussian,
  title={Gaussian Processes for Machine Learning},
  author    = {Rasmussen, Carl Edward and Williams, Christopher K. I.},
  isbn={9780262182539},
  lccn={2005053433},
  series={Adaptive Computation and Machine Learning series},
  url={\url{https://gaussianprocess.org/gpml/}},
  year={2006},
  publisher={MIT Press}
}

@book{MatrixComputations,
author = {Golub, Gene H. and Van Loan, Charles F.},
address = {Baltimore},
title = {Matrix Computations},
edition = {3rd},
isbn = {0801854148},
keywords = {Matrices -- Data processing},
language = {eng},
publisher = {Johns Hopkins University Press},
series = {Johns Hopkins Studies in the Mathematical Sciences},
year = {1996}
}

@book{AppliedNumericalLinearAlgebra,
author = {Demmel, James W.},
title = {{Applied Numerical Linear Algebra}},
publisher = {Society for Industrial and Applied Mathematics},
year = {1997},
doi = {10.1137/1.9781611971446},
address = {Philadelphia},
edition   = {},
URL = {\url{https://epubs.siam.org/doi/abs/10.1137/1.9781611971446}},
eprint = {https://epubs.siam.org/doi/pdf/10.1137/1.9781611971446}
}

@book{Horn_Johnson_2012, 
place={Cambridge}, 
edition={2nd}, title={Matrix Analysis}, 
publisher={Cambridge University Press}, 
author={Horn, Roger A. and Johnson, Charles R.}, 
year={2012},
address = {Cambridge},
url = {\url{https://www.cambridge.org/highereducation/books/matrix-analysis/FDA3627DC2B9F5C3DF2FD8C3CC136B48\#overview}}}

@book{IntroductionToRealAnalysis,
author = {Bartle, Robert Gardner and Sherbert, Donald R.},
address = {New York},
edition = {3rd},
isbn = {0471321486},
keywords = {Mathematical analysis ; Functions of real variables},
language = {eng},
lccn = {99013829},
publisher = {{John Wiley \& Sons, Inc.}},
title = {{Introduction to Real Analysis}},
year = {2000},
}

@article{overview_sparseGP:quinonero-candela05a,
  author  = {Joaquin Qui{{\~n}}onero-Candela and Carl Edward Rasmussen},
  title   = {A Unifying View of Sparse Approximate {G}aussian Process Regression},
  journal = {Journal of Machine Learning Research},
  year    = {2005},
  volume  = {6},
  number  = {65},
  pages   = {1939--1959},
  url     = {\url{http://jmlr.org/papers/v6/quinonero-candela05a.html}}
}

@book{Wendland_2004, place={Cambridge}, series={Cambridge Monographs on Applied and Computational Mathematics}, title={Scattered Data Approximation}, publisher={Cambridge University Press}, author={Wendland, Holger}, year={2004}, collection={Cambridge Monographs on Applied and Computational Mathematics}, address = {Cambridge},url = {\url{https://www.cambridge.org/core/books/scattered-data-approximation/980EEC9DBC4CAA711D089187818135E3}}}

@inproceedings{FITC,
 author = {Snelson, Edward and Ghahramani, Zoubin},
 booktitle = {Advances in Neural Information Processing Systems},
 publisher = {MIT Press},
 title = {{Sparse Gaussian Processes using Pseudo-inputs}},
 volume = {18},
 year = {2005},
pages = {1257–1264},
numpages = {8}
}

@misc{neurips_code,
  author       = {César Lincoln C. Mattos and Felipe Tobar},
  year         = {2021},
  title        = {{The Art of Gaussian Processes: Classical and Contemporary}},
  howpublished = {GitHub repository available: \url{https://github.com/GAMES-UChile/The_Art_of_Gaussian_Processes}},
  note         = {Presented at NeurIPS 2021 (virtual), 06 December, Licensed under MIT}
}

@ARTICLE{GPflow2017,
  author = {Matthews, Alexander G. de G. and {van der Wilk}, Mark and Nickson, Tom and
	Fujii, Keisuke. and {Boukouvalas}, Alexis and {Le{\'o}n-Villagr{\'a}}, Pablo and
	Ghahramani, Zoubin and Hensman, James},
    title = "{{GP}flow: A {G}aussian process library using {T}ensor{F}low}",
  journal = {Journal of Machine Learning Research},
  year    = {2017},
  month = {apr},
  volume  = {18},
  number  = {40},
  pages   = {1-6},
  url     = {\url{http://jmlr.org/papers/v18/16-537.html}}
}

@misc{SILSO_Sunspot_Number,
author = {{Clette}, F. and {Lefèvre}, L.},
title = {SILSO Sunspot Number V2.0},
doi = {10.24414/qnza-ac80},
url = {\url{https://doi.org/10.24414/qnza-ac80}},
month = {07},
year = {2015},
note = {Published by WDC SILSO - Royal Observatory of Belgium (ROB), Licensed under CC BY-NC 4.0}
}

@misc{eeg,
  author       = {Nathan Stevenson and
                  Karoliina Tapani and
                  Leena Lauronen and
                  Sampsa Vanhatalo},
  title        = {{A Dataset of Neonatal EEG Recordings with Seizures
                   Annotations
                  }},
  month        = jun,
  year         = 2018,
  publisher    = {Zenodo},
  doi          = {10.5281/zenodo.2547147},
  url          = {\url{https://doi.org/10.5281/zenodo.2547147}},
note = {Licensed under CC BY 4.0}
}

@inproceedings{FITC_VFE,
 author = {Bauer, Matthias and van der Wilk, Mark and Rasmussen, Carl Edward},
 booktitle = {Advances in Neural Information Processing Systems},
 pages = {},
 publisher = {Curran Associates, Inc.},
 title = {Understanding Probabilistic Sparse {G}aussian Process Approximations},
 volume = {29},
 year = {2016}
}

@InProceedings{titsias09a,
  title = 	 {Variational Learning of Inducing Variables in Sparse {G}aussian Processes},
  author = 	 {Titsias, Michalis},
  booktitle = 	 {Proceedings of the International Conference on Artificial Intelligence and Statistics},
  pages = 	 {567--574},
  year = 	 {2009},
  volume = 	 {5}
}

@article{vecchia,
    author = {Vecchia, A. V.},
    title = {Estimation and Model Identification for Continuous Spatial Processes},
    journal = {Journal of the Royal Statistical Society: Series B (Methodological)},
    volume = {50},
    number = {2},
    pages = {297-312},
    year = {1988},
    month = {12},
    url = {\url{https://doi.org/10.1111/j.2517-6161.1988.tb01729.x}},
    eprint = {https://academic.oup.com/jrsssb/article-pdf/50/2/297/49098132/jrsssb\_50\_2\_297.pdf},
}

@article{nngp,
author = {Abhirup Datta and Sudipto Banerjee and Andrew O. Finley and Alan E. Gelfand},
title = {Hierarchical Nearest-Neighbor {G}aussian Process Models for Large Geostatistical Datasets},
journal = {Journal of the American Statistical Association},
volume = {111},
number = {514},
pages = {800--812},
year = {2016},
publisher = {ASA Website},
doi = {10.1080/01621459.2015.1044091},
    note ={PMID: 29720777},
URL = { \url{https://doi.org/10.1080/01621459.2015.1044091}},
eprint = {https://doi.org/10.1080/01621459.2015.1044091}
}

@article{Furrer01092006,
author = {Reinhard Furrer and Marc G Genton and Douglas Nychka},
title = {Covariance Tapering for Interpolation of Large Spatial Datasets},
journal = {Journal of Computational and Graphical Statistics},
volume = {15},
number = {3},
pages = {502--523},
year = {2006},
publisher = {ASA Website},
doi = {10.1198/106186006X132178},
URL = {\url{https://doi.org/10.1198/106186006X132178}},
eprint = {https://doi.org/10.1198/106186006X132178}
}

@article{general_vecchia,
author = {Matthias Katzfuss and Joseph Guinness},
title = {A General Framework for Vecchia Approximations of {G}aussian Processes},
volume = {36},
journal = {Statistical Science},
number = {1},
publisher = {Institute of Mathematical Statistics},
pages = {124 -- 141},
keywords = {computational complexity, covariance approximation, directed acyclic graphs, large datasets, Sparsity, spatial statistics},
year = {2021},
doi = {10.1214/19-STS755},
URL = {https://doi.org/10.1214/19-STS755}
}

\end{document}